\pgfplotsset{compat=1.11}
\newcommand{\bg}{\boldsymbol{g}}
\newcommand{\bq}{\boldsymbol{q}}
\newcommand{\bx}{\boldsymbol{x}}
\newcommand{\bu}{\boldsymbol{u}}
\newcommand{\by}{\boldsymbol{y}}
\newcommand{\bv}{\boldsymbol{v}}
\newcommand{\argmin}{\mathop{\mathrm{argmin}}}
\newcommand{\dom}{\mathop{\mathrm{dom}}}
\newcommand{\field}[1]{\mathbb{#1}}
\newcommand{\R}{\field{R}}
\newtheorem{theorem}{Theorem}
\newtheorem{lemma}{Lemma}
\newtheorem{cor}{Corollary}
\newtheorem{remark}{Remark}
\newtheorem{definition}{Definition}
\begin{document}

%%%%%%%%% TITLE
\title{Normalized Gradients for All}
\author{Francesco Orabona\thanks{Work done mainly while at Boston University.}\\
KAUST\\
{\tt\small francesco@orabona.com}
}

\maketitle

\begin{abstract}
In this short note, I show how to adapt to H\"{o}lder smoothness using normalized gradients in a black-box way. Moreover, the bound will depend on a novel notion of local H\"{o}lder smoothness.
The main idea directly comes from \citet{Levy17}.
\end{abstract}

\section{Introduction}

Here, I present\footnote{A preliminary version of this report appeared on June 19th 2023 as a blogpost: \url{https://parameterfree.com/2023/06/19/adapting-to-smoothness-with-normalized-gradients/}.} some results about the use of normalized gradients to obtain adaptivity to H\"{o}lder smoothness.
The reduction is very generic, so I will apply it to OGD, Dual Averaging, and parameter-free algorithms.

First, I will consider a close relative of normalized gradients: AdaGrad-norm stepsizes. Then, I will show similar results for the normalized gradients.

The core ideas are directly derived from \citet{Levy17}. Indeed, the main aim of this note is to show how some very recent optimization results on normalized gradients are in fact well-known in the online learning community. The hope of the author is to instill a major awareness and academic respect for online learning results in the optimization community.

\section{Definitions and Setup}
In this section, I describe the definitions and setup.

\paragraph{H\"{o}lder Smoothness}
For the following definition, see \citet{Nesterov15b}.
\begin{definition}
Let $\nu \in [0,1]$ and define the class of $(L_\nu,\nu)$-H\"{o}lder smooth functions with respect to $\|\cdot\|$ the ones that satisfy
\[
L_\nu = \max_{\bx,\by \in \R^d, \bx\neq \by} \frac{\|\nabla f(\bx) - \nabla f(\by)\|_\star}{\|\bx - \by\|^\nu} < \infty
\]
where $\|\cdot\|_\star$ is the dual norm of $\|\cdot\|$.
\end{definition}
This definition is useful because it allows us to capture a wide range of functions in a unified way.
It is a generalization of smoothness and Lipschitzness, in fact for smooth functions $\nu=1$ and for Lipschitz functions $\nu=0$.

For this class of functions, we can prove the following well-known result \citep[see, e.g.,][]{Nesterov15b}.
\begin{theorem}
\label{thm:holder}
Let $f:\R^d \to \R$ be a $(L_\nu,\nu)$-H\"{o}lder smooth function with respect to $\|\cdot\|$.
Then, for $\bx, \by \in \dom f$ and $\bg \in \nabla f(x)$, we have 
\[
f(\by) \leq  f(\bx) + \langle \nabla f(\bx), \by-\bx\rangle + \frac{L_\nu}{1+\nu}\|\bx-\by\|^{1+\nu}~.
\]
\end{theorem}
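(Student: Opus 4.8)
The plan is to reduce to a one-dimensional statement via the fundamental theorem of calculus along the segment joining $\bx$ and $\by$, and then control the resulting integral with the definition of $(L_\nu,\nu)$-H\"{o}lder smoothness together with the duality-of-norms (generalized Cauchy--Schwarz) inequality $\langle \bu, \bv\rangle \le \|\bu\|_\star \|\bv\|$. The identity that makes everything work is
\[
f(\by) - f(\bx) - \langle \nabla f(\bx), \by-\bx\rangle = \int_0^1 \langle \nabla f(\bx + t(\by-\bx)) - \nabla f(\bx), \, \by-\bx\rangle \, dt,
\]
after which the right-hand side only needs to be bounded term by term.

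Concretely, first I would define $\phi(t) \defeq f(\bx + t(\by-\bx))$ for $t \in [0,1]$, note that $\phi$ is differentiable with $\phi'(t) = \langle \nabla f(\bx + t(\by-\bx)), \by-\bx\rangle$, and write $f(\by) - f(\bx) = \phi(1) - \phi(0) = \int_0^1 \phi'(t)\,dt$. Subtracting $\langle \nabla f(\bx), \by-\bx\rangle = \int_0^1 \langle \nabla f(\bx), \by-\bx\rangle\,dt$ from both sides yields the displayed identity above. Then I would apply the generalized Cauchy--Schwarz inequality inside the integral to get
\[
f(\by) - f(\bx) - \langle \nabla f(\bx), \by-\bx\rangle \le \int_0^1 \|\nabla f(\bx + t(\by-\bx)) - \nabla f(\bx)\|_\star \, \|\by-\bx\| \, dt.
\]

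Next, I would invoke the H\"{o}lder smoothness definition with the two points $\bx + t(\by-\bx)$ and $\bx$, whose difference has norm $t\|\by-\bx\|$, so that $\|\nabla f(\bx + t(\by-\bx)) - \nabla f(\bx)\|_\star \le L_\nu \, t^\nu \|\by-\bx\|^\nu$. Substituting this bound gives $\int_0^1 L_\nu t^\nu \|\by-\bx\|^{1+\nu}\,dt = L_\nu \|\by-\bx\|^{1+\nu} \int_0^1 t^\nu\,dt = \frac{L_\nu}{1+\nu}\|\by-\bx\|^{1+\nu}$, which is exactly the claimed inequality; the $t=0$ case (when $\bx+t(\by-\bx)=\bx$) contributes nothing and the edge cases $\nu=0$ and $\bx=\by$ are handled trivially.

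The main obstacle is not the computation, which is routine, but making the fundamental-theorem-of-calculus step rigorous: one should check that $t \mapsto \nabla f(\bx + t(\by-\bx))$ is integrable on $[0,1]$, which follows because H\"{o}lder smoothness forces $\nabla f$ to be continuous (indeed H\"{o}lder continuous), and one should be mindful that the statement writes $\bg \in \nabla f(\bx)$ whereas a genuine gradient is used in the inequality, so I would simply take $\nabla f$ to be single-valued here (as is implicit in the H\"{o}lder-smoothness definition, which presupposes $\nabla f(\bx)$ exists everywhere). Everything else is a direct substitution.
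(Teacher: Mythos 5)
Your proposal is correct and follows essentially the same route as the paper's proof: the fundamental theorem of calculus along the segment, subtraction of the linear term, the dual-norm Cauchy--Schwarz bound, and the H\"{o}lder smoothness estimate $\|\nabla f(\bx+t(\by-\bx))-\nabla f(\bx)\|_\star \le L_\nu t^\nu\|\by-\bx\|^\nu$ integrated to give $\frac{L_\nu}{1+\nu}\|\by-\bx\|^{1+\nu}$. Your added remarks on integrability and on reading $\nabla f$ as single-valued are sensible but do not change the argument.
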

\begin{proof}
\begin{align*}
f(\by) 
&= f(\bx) + \int_0^1 \langle \nabla f(\bx+\tau (\by-\bx)), \by -\bx\rangle d \tau \\
&= f(\bx) + \langle \nabla f(\bx), \by-\bx\rangle + \int_0^1 \langle \nabla f(\bx+\tau (\by-\bx))- \nabla f(\bx), \by -\bx\rangle d \tau~.
\end{align*}
Therefore,
\begin{align*}
|f(\by)-f(\bx) -\langle \nabla f(\bx), \by -\bx\rangle|
&= \left|\int_0^1 \langle \nabla f(\bx+\tau (\by-\bx))- \nabla f(\bx), \by -\bx\rangle d \tau\right| \\
&\leq \int_0^1 |\langle \nabla f(\bx+\tau (\by-\bx))- \nabla f(\bx), \by -\bx\rangle| d \tau \\
&\leq \int_0^1 \|\nabla f(\bx+\tau (\by-\bx))- \nabla f(\bx)\|_\star \|\by -\bx\| d \tau \\
&\leq \int_0^1 \tau^\nu L_\nu \|\by -\bx\|^{1+\nu} d \tau
= \frac{L_\nu}{1+\nu} \|\by -\bx\|^{1+\nu}~. \qedhere
\end{align*}
\end{proof}

The following Corollary is an immediate consequence.
\begin{cor}
\label{cor:holder}
Let $\nu \in(0,1]$, $f:\R^d \to \R$ be $(L_\nu,\nu)$-H\"{o}lder smooth with respect to $\|\cdot\|$, and $\bx^\star \in \argmin_{\bx} f(\bx)$.
Then, for any $\bx \in \R^d$
\[
\|\nabla f(\bx)\|_\star^{\frac{1}{\nu}+1}
\leq \left(1+\frac{1}{\nu}\right) (L_\nu)^\frac{1}{\nu}(f(\bx) -  f(\bx^\star))~.
\]
\end{cor}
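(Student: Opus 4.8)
The plan is to apply Theorem~\ref{thm:holder} with a cleverly chosen point $\by$, namely a gradient step from $\bx$, and then exploit $f(\by) \geq f(\bx^\star)$. Concretely, fix $\bx$ and set $\by = \bx - t \bu$ where $\bu$ is a unit vector (in $\|\cdot\|$) chosen so that $\langle \nabla f(\bx), \bu\rangle = \|\nabla f(\bx)\|_\star$ — such a $\bu$ exists by definition of the dual norm — and $t > 0$ is a step size to be optimized. Then Theorem~\ref{thm:holder} gives
\[
f(\bx^\star) \leq f(\by) \leq f(\bx) - t\|\nabla f(\bx)\|_\star + \frac{L_\nu}{1+\nu} t^{1+\nu}~.
\]
Rearranging, $f(\bx) - f(\bx^\star) \geq t\|\nabla f(\bx)\|_\star - \frac{L_\nu}{1+\nu} t^{1+\nu}$, and the task reduces to choosing $t$ to maximize the right-hand side.

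The optimization over $t$ is a one-variable calculus exercise: differentiating $t \mapsto t\|\nabla f(\bx)\|_\star - \frac{L_\nu}{1+\nu}t^{1+\nu}$ and setting the derivative to zero yields $\|\nabla f(\bx)\|_\star = L_\nu t^\nu$, i.e. $t = \big(\|\nabla f(\bx)\|_\star / L_\nu\big)^{1/\nu}$. Substituting this value back in, both terms become multiples of $\|\nabla f(\bx)\|_\star^{1/\nu + 1} / L_\nu^{1/\nu}$, and collecting the coefficients $1 - \frac{1}{1+\nu} = \frac{\nu}{1+\nu}$ gives
\[
f(\bx) - f(\bx^\star) \geq \frac{\nu}{1+\nu}\cdot \frac{\|\nabla f(\bx)\|_\star^{\frac{1}{\nu}+1}}{L_\nu^{1/\nu}}~,
\]
which rearranges to exactly the claimed inequality since $\frac{1+\nu}{\nu} = 1 + \frac{1}{\nu}$.

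I do not anticipate a serious obstacle here; this is the standard "a point with small function value must have small gradient" argument adapted to the Hölder setting. The only points requiring a little care are: (i) confirming the dual-norm witness vector $\bu$ with $\|\bu\| = 1$ and $\langle \nabla f(\bx),\bu\rangle = \|\nabla f(\bx)\|_\star$ genuinely exists (true in finite dimensions, and the statement is over $\R^d$); (ii) noting the degenerate case $\nabla f(\bx) = \bzero$, where the claimed inequality holds trivially since the left side is zero and the right side is nonnegative; and (iii) checking that the hypothesis $\nu \in (0,1]$ (excluding $\nu = 0$) is what makes the exponent $1/\nu$ and the step size well-defined. Everything else is the routine substitution sketched above.
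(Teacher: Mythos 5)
Your proof is correct and follows essentially the same route as the paper's: both take a step of optimized length along a dual-norm witness direction of $\nabla f(\bx)$, apply Theorem~\ref{thm:holder} together with $f(\by)\geq f(\bx^\star)$, and optimize the step size (your $t$ equals the paper's $\alpha\|\nabla f(\bx)\|_\star$, so the two parametrizations coincide). Your added remarks on the existence of the unit witness vector and the degenerate case $\nabla f(\bx)=\bzero$ are fine and, if anything, slightly more careful than the paper.
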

\begin{proof}
Let $\bv$ is any vector such that $\langle \nabla f(\bx), \bv\rangle=-\|\nabla f(\bx)\|_\star \|\bv\|$ and scaled in such a way that $\|\bv\| = \alpha \|\nabla f(\bx)\|_\star $. Then, from Theorem~\ref{thm:holder} with $\by=\bx+\bv$, we have
\[
f(\bx^\star)
\leq f(\bx+\bv) 
\leq  f(\bx) - \alpha \|\nabla f(\bx)\|_\star^2 + \alpha^{1+\nu} \frac{L_\nu}{1+\nu}\|\nabla f(\bx)\|_\star^{1+\nu}~.
\]
So, for any $\alpha>0$, we have
\[
f(\bx^\star) - f(\bx) 
\leq -\alpha \|\nabla f(\bx)\|_\star^2 + \alpha^{1+\nu} \frac{L_\nu}{1+\nu}\|\nabla f(\bx)\|_\star^{1+\nu}~.
\]
With the optimal setting of $\alpha= (L_\nu)^{-\frac{1}{\nu}} \|\nabla f(\bx)\|^\frac{1-\nu}{\nu}$, the right hand side of this inequality becomes
\begin{align*}
&-(L_\nu)^{-\frac{1}{\nu}} \|\nabla f(\bx)\|_\star^\frac{1-\nu}{\nu} \|\nabla f(\bx)\|_\star^2 + (L_\nu)^{-\frac{1+\nu}{\nu}} \|\nabla f(\bx)\|_\star^\frac{(1+\nu)(1-\nu)}{\nu} \frac{L_\nu}{1+\nu}\|\nabla f(\bx)\|_\star^{1+\nu} \\
&=-(L_\nu)^{-\frac{1}{\nu}} \|\nabla f(\bx)\|_\star^{\frac{1}{\nu}+1}+\frac{1}{1+\nu}(L_\nu)^{-\frac{1}{\nu}}\|\nabla f(\bx)\|_\star^{\frac{1}{\nu}+1}\\
&=-\frac{\nu}{1+\nu}(L_\nu)^{-\frac{1}{\nu}} \|\nabla f(\bx)\|_\star^{\frac{1}{\nu}+1}~.
\end{align*}
Reordering, we have the stated bound.
\end{proof}

\paragraph{Setup} Our objective is to minimize a convex H\"{o}lder smooth function $f:\R^d \to \R$. However, we do not know $\nu$ and $L_\nu$. We have access to the gradient $\nabla f(\bx)$ in any query point $\bx$. For shortness of notation, we will denote by $\bg_t:=\nabla f(\bx_t)$ where $\bx_t$ is the point where we query the gradient at iteration $t$.

\section{Warm-up: AdaGrad-Norm Adapts to H\"{o}lder Smoothness}

Consider to use Gradient Descent: $\bx_{t+1} = \bx_t - \eta_t \nabla f(\bx_t)$.
It should be easy to see that the optimal learning rate $\eta_t$ will depend on the H\"{o}lder exponent $\nu$. In fact, it is enough to consider the two extreme cases, $\nu=0$ and $\nu=1$. For $\nu=0$, the function is Lipschitz and the optimal learning rate is $\eta_t \propto \frac{1}{\sqrt{T}}$ or $\eta_t \propto \frac{1}{\sqrt{t}}$ that would give you a convergence rate of $O(\frac{1}{\sqrt{T}})$. Instead, for the case $\nu=1$ we can get the rate\footnote{Note that in the smooth case ($\nu=1$) the rate is not optimal because you can get the accelerated one that is $O(1/T^2)$, for example using Nesterov's momentum algorithm~\citet{Nesterov83}.} $O(1/T)$ using a constant learning rate, independent of $T$. So, you should know in which case you are in order to set the right learning rate.

From this reasoning, you should see that the rate that gradient descent (without acceleration) can achieve on H\"{o}lder smooth functions depend on $\nu$ and it goes from $O(1/\sqrt{T})$ to $O(1/T)$. See, for example, \citet[Corollary 9]{Grimmer19} for the precise learning rate and convergence guarantee.

Now, our objective is simple: We want to achieve the above rate without knowing $\nu$ for all the H\"{o}lder smooth functions. Sometime people call this kind of property ``adaptivity'' or ``universality'', even if universality as defined by \citet{Nesterov15b} would achieve the optimal accelerated rate for all $\nu$.

Let's now warm up a bit considering the so-called AdaGrad-norm stepsizes~\citep{StreeterM10}, that is $\eta_t=\frac{\alpha}{\sqrt{\sum_{i=1}^t \|\bg_i\|_2^2}}$.

It is known that gradient descent with the AdaGrad-norm stepsizes gets a faster rate on smooth functions, that is $O(1/T)$ instead of the usual $O(1/\sqrt{T})$~\citep{LiO19}. So, what happens on H\"{o}lder smooth functions? Given that Nature tends to be continuous, it should be intuitive that AdaGrad-norm stepsizes should give us a rate that smoothly interpolates between the Lipschitz and smooth case. Let's see how.

I will consider the Follow-The-Regularized-Leader (FTRL) with linearized losses~\citep{Gordon99,ShalevS06,AbernethyHR08,HazanK08} (or as it is known in the optimization community Dual Averaging (DA)~\citep{Nesterov09}) version of AdaGrad-norm to have a simpler proof, but the analysis in the gradient descent case is also possible. In this case, the update is
\[
\bx_{t+1} 
=\bx_1 - \frac{\alpha}{\sqrt{G^2+\sum_{i=1}^{t} \|\bg_i\|_2^2}}\sum_{i=1}^t \bg_i,
\]
where $G \geq \|\bg_i\|_2$ for all $i$ and $\bx_1$ is arbitrarily chosen. 
From its known guarantee, we have
\begin{align*}
\sum_{t=1}^T (f(\bx_t) - f(\bx^\star))
\leq \left(\frac{\|\bx_1-\bx^\star\|_2^2}{2\alpha} + \alpha\right) \sqrt{G^2+\sum_{t=1}^T \|\bg_t\|_2^2}
\leq \left(\frac{\|\bx_1-\bx^\star\|_2^2}{2\alpha} + \alpha\right) \left(G+\sqrt{\sum_{t=1}^T \|\bg_t\|_2^2}\right)~.
\end{align*}
Now, we want to transform the terms $\|\bg_t\|_2^2$ in $\|\bg_t\|_2^{1+1/\nu}$ to be able to apply Corollary~\ref{cor:holder}. So, focus on the last term and use H\"{o}lder's inequality:
\begin{align*}
\sum_{t=1}^T \|\bg_t\|^2_2
\leq \left(\sum_{t=1}^T 1^{q}\right)^{1/q} \left(\sum_{t=1}^T \|\bg_t\|_2^{2p}\right)^\frac{1}{p}
= T^{1-\frac{1}{p}}\left(\sum_{t=1}^T \|\bg_t\|_2^{2p}\right)^\frac{1}{p}~.
\end{align*}
Setting $p= \frac{1}{2}+\frac{1}{2\nu}$ so that $1/p=\frac{2\nu}{\nu+1}$, we have
\begin{align*}
\sqrt{\sum_{t=1}^T \|\bg_t\|^2}
\leq T^{\frac{-\nu+1}{2(\nu+1)}}\left(\sum_{t=1}^T \|\bg_t\|_2^{1+1/\nu}\right)^\frac{\nu}{\nu+1}
\end{align*}
If the function $f$ is $(L_\nu,\nu)$-H\"{o}lder smooth, from Corollary~\ref{cor:holder} we have
\begin{align*}
\sum_{t=1}^T (f(\bx_t)-f(\bx^\star)) 
&\leq \left(\frac{\|\bx_1-\bx^\star\|_2^2}{2\alpha} + \alpha\right) \left(G+\sqrt{\sum_{t=1}^T \|\bg_t\|_2^2}\right) \\
&\leq \left(\frac{\|\bx_1-\bx^\star\|_2^2}{2\alpha} + \alpha\right) \left[G+T^\frac{-\nu+1}{2(\nu+1)}\left(\sum_{t=1}^T \|\bg_t\|_2^{1+\frac{1}{\nu}}\right)^\frac{\nu}{\nu+1}\right] \\
&\leq \left(\frac{\|\bx_1-\bx^\star\|_2^2}{2\alpha} + \alpha\right) \left[G+T^\frac{-\nu+1}{2(\nu+1)}\left[(1/\nu+1)L_\nu^{1/\nu}  \sum_{t=1}^T (f(\bx_t)-f(\bx^\star)) \right]^\frac{\nu}{\nu+1}\right]~.
\end{align*}
Using a case analysis on the value of $G$, we have
\begin{align*}
\sum_{t=1}^T (f(\bx_t)-f(\bx^\star)) 
\leq \max\left(L_\nu \left(\frac{1}{\nu}+1\right)^\nu \left(\frac{\|\bx_1-\bx^\star\|_2^2}{\alpha} + 2\alpha\right)^{\nu+1} T^\frac{-\nu+1}{2}, G\left(\frac{\|\bx_1-\bx^\star\|_2^2}{\alpha} + 2\alpha\right)\right)~.
\end{align*}
The online-to-batch conversion completes the convergence rate:
\[
f(\bar{\bx}_T)-f(\bx^\star)
\leq \max\left(L_\nu \left(\frac{1}{\nu}+1\right)^\nu \left(\frac{1}{\sqrt{T}}\left(\frac{\|\bx_1-\bx^\star\|_2^2}{2\alpha} + \alpha\right)\right)^{\nu+1}, \frac{G}{T}\left(\frac{\|\bx_1-\bx^\star\|_2^2}{\alpha} + 2\alpha\right)\right),
\]
where $\bar{\bx}_T$ is the average of the iterates, $\frac1T\sum_{t=1}^T f(\bx_t)$. 

So, we obtained that AdaGrad-norm stepsizes will give a $O(1/\sqrt{T})$ rate for Lipschitz functions, $O(1/T)$ for the usual smooth function and a rate in between for H\"{o}lder smooth functions, without the need to know $L_\nu$ nor $\nu$. 
In the next section, we will see how to generalize this idea a bit more. Also, I will show a way to prove that the iterates are bounded, that can be used on AdaGrad-norm stepsizes too.

\section{Adapting to Local H\"{o}lder Smoothness with Normalized Gradients}

From the previous section, it is unclear what is the key ingredient in AdaGrad-norm stepsizes that gives us the adaptivity. Is the use of the normalization by the past gradients or the normalization by the current gradient would suffice?

It turns out that normalized gradients are enough. Moreover, I will prove a stronger result: The algorithm will adapt to a certain notion of ``local'' H\"{o}lder smoothness.
The proposed method is a black-box reduction: you can use it with any online optimization algorithm that guarantees an upper bound to its regret. This procedure is summarized in Algorithm~\ref{alg:nogd}.

\begin{algorithm}[t]
\caption{Optimization with Normalized Gradients}
\label{alg:nogd}
\begin{algorithmic}[1]
{
    \REQUIRE{An online linear learning algorithm $\mathcal{A}$, a function $f:\R^d \to\R$ to minimize}
    \FOR{$t=1$ {\bfseries to} $T$}
    \STATE{Get $\bx_t$ from $\mathcal{A}$}
    \STATE{Set $\bg_t= \nabla \ell_t(\bx_t)$}
    \IF{$\|\bg_t\|_\star=0$}
    \RETURN{$\bx_t$, optimal point}
    \ENDIF
    \STATE{Pass to $\mathcal{A}$ the loss $\ell_t(\bx)=\left\langle \frac{\bg_t}{\|\bg_t\|_\star}, \bx\right\rangle$}
    \ENDFOR
    \RETURN{$\bar{\bx}_T=\frac{1}{\sum_{t=1}^T \frac{1}{\|\bg_t\|_\star}}\sum_{t=1}^T \frac{\bx_t}{\|\bg_t\|_\star}$}
}
\end{algorithmic}
\end{algorithm}

Let's see the details in the following theorem.
\begin{theorem}
\label{thm:main}
Suppose you have an online linear optimization algorithm $\mathcal{A}$ that guarantees
\[
\sum_{t=1}^T \langle \bq_t, \bx_t - \bu\rangle
\leq \psi_T(\|\bu\|)
\]
when fed with linear losses $\ell_t(\bx)=\langle\bq_t,\bx\rangle$ where $\|\bq_t\|_\star= 1$ for all $t$, for some function $\psi_T:\R^d \to \R$ and any $\bu \in \R^d$.
Let $f:\R^d \to \R$ convex and $\bx^\star \in \argmin_{\bx} f(\bx)$. Assume that there exists $\nu \in [0,1]$, $\alpha>0$, and $L(\bx)$ such that for all $\bx \in \R^d$ we have
\[
\|\nabla f(\bx)\|_\star
\leq \alpha^\frac{\nu}{1+\nu} (L(\bx))^\frac{1}{\nu+1}(f(\bx) -  f(\bx^\star))^\frac{\nu}{1+\nu}~.
\]
%$(L_\nu,\nu)$-H\"{o}lder smooth with respect to $\|\cdot\|$, and let .
Then, for any $\nu \in [0,1]$, Algorithm~\ref{alg:nogd} guarantees
\begin{align*}
f(\bar{\bx}_T) - f(\bx^\star)
\leq \alpha^\nu \left(\frac{\psi_T(\bx^\star)}{T}\right)^{1+\nu} \left( \prod_{t=1}^T L(\bx_t)\right)^\frac{1}{T} 
\leq \alpha^\nu \left(\frac{\psi_T(\bx^\star)}{T}\right)^{1+\nu} \frac{1}{T}\sum_{t=1}^T L(\bx_t) ~.
%+ L_\nu \left(1+\frac{1}{\nu}\right)^\nu \left(\frac{\psi_T(\bx^\star)}{T}\right)^{1+\nu}~.
\end{align*}
\end{theorem}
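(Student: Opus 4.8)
The plan is to feed $\mathcal{A}$ the normalized gradients, use convexity twice (once on each iterate, once via Jensen on the weighted average $\bar{\bx}_T$), and then convert the resulting sums into geometric means of the local constants using the AM--GM inequality together with the self-bounding hypothesis. First I would observe that the loss $\ell_t(\bx)=\langle \bg_t/\|\bg_t\|_\star,\bx\rangle$ passed to $\mathcal{A}$ has the required form, since the normalized gradient has dual norm $1$; moreover if $\|\bg_t\|_\star=0$ at some step then $\bx_t$ minimizes the differentiable convex $f$ and the algorithm returns it, so the bound is trivial and I may assume $\bg_t\neq\bzero$ (hence $f(\bx_t)-f(\bx^\star)>0$ and $L(\bx_t)>0$) for all $t$. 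Applying the regret bound of $\mathcal{A}$ with $\bu=\bx^\star$, and using $f(\bx_t)-f(\bx^\star)\leq\langle\bg_t,\bx_t-\bx^\star\rangle$, I set $u_t:=(f(\bx_t)-f(\bx^\star))/\|\bg_t\|_\star$ and obtain $\sum_{t=1}^T u_t\leq\psi_T(\bx^\star)$. Jensen's inequality applied to $\bar{\bx}_T$, whose weights are precisely $1/\|\bg_t\|_\star$, then gives
\[
f(\bar{\bx}_T)-f(\bx^\star)\leq\frac{\sum_{t=1}^T u_t}{\sum_{t=1}^T 1/\|\bg_t\|_\star}\leq\frac{\psi_T(\bx^\star)}{\sum_{t=1}^T 1/\|\bg_t\|_\star}~.
\]

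The remaining work is to lower bound $\sum_t 1/\|\bg_t\|_\star$. Substituting the hypothesis $\|\bg_t\|_\star\leq\alpha^{\nu/(1+\nu)}L(\bx_t)^{1/(1+\nu)}(f(\bx_t)-f(\bx^\star))^{\nu/(1+\nu)}$ into $u_t$ and raising the resulting inequality to the power $1+\nu$ yields $f(\bx_t)-f(\bx^\star)\leq\alpha^\nu L(\bx_t)u_t^{1+\nu}$, hence $1/\|\bg_t\|_\star=u_t/(f(\bx_t)-f(\bx^\star))\geq 1/(\alpha^\nu L(\bx_t)u_t^\nu)$. By AM--GM, $\sum_t 1/(L(\bx_t)u_t^\nu)\geq T(\prod_t L(\bx_t))^{-1/T}(\prod_t u_t)^{-\nu/T}$; a second AM--GM on the $u_t$, combined with $\sum_t u_t\leq\psi_T(\bx^\star)$, gives $(\prod_t u_t)^{1/T}\leq\psi_T(\bx^\star)/T$, so (as $\nu\geq 0$) $(\prod_t u_t)^{-\nu/T}\geq(T/\psi_T(\bx^\star))^\nu$. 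Putting these together, $\sum_t 1/\|\bg_t\|_\star\geq\alpha^{-\nu}\psi_T(\bx^\star)^{-\nu}T^{1+\nu}(\prod_t L(\bx_t))^{-1/T}$, and plugging this into the display above makes the powers of $\alpha$, $\psi_T$ and $T$ combine into exactly $\alpha^\nu(\psi_T(\bx^\star)/T)^{1+\nu}(\prod_t L(\bx_t))^{1/T}$. The last inequality in the statement is one more use of AM--GM, $(\prod_t L(\bx_t))^{1/T}\leq\frac1T\sum_t L(\bx_t)$.

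I do not expect a genuinely hard step: the argument is the self-bounding trick applied pointwise (the hypothesis is exactly the per-point version of Corollary~\ref{cor:holder}, which for constant $L(\bx)\equiv L_\nu$ and $\alpha=1+1/\nu$ recovers the global H\"older-smooth case) combined with normalized gradients as in \citet{Levy17}. The only things to be careful about are the degenerate cases that make $\bar{\bx}_T$ and the products well defined, and the direction of the two AM--GM steps, where $u_t$ enters with a negative exponent, so one must invert $(\prod_t u_t)^{1/T}\leq\psi_T(\bx^\star)/T$ rather than apply it directly.
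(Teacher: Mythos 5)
Your proposal is correct and follows essentially the same route as the paper: convexity plus the regret bound at $\bu=\bx^\star$, Jensen's inequality on the weighted average $\bar{\bx}_T$, the pointwise self-bounding hypothesis, and two applications of the HM--GM--AM inequality. The only difference is cosmetic ordering (you substitute the hypothesis into each $1/\|\bg_t\|_\star$ before taking means, whereas the paper first passes to the geometric mean of the $\|\bg_t\|_\star$ and substitutes inside it), and your observation that the argument needs no separate $\nu=0$ case is a minor simplification over the paper's presentation.
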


Here, the adaptive rate depends on the \emph{geometric mean} of the ``local'' smoothnesses on the iterates $L(\bx_t)$, that in turn is upper bounded by their arithmetic mean. Note that if the function is $(L_\nu,\nu)$-H\"{o}lder smooth everywhere, then $\alpha^\nu\leq \left(1+\frac{1}{\nu}\right)^\nu\in[1,2]$ and $L(\bx_t)\leq L_\nu$, improving the result in the previous section.

To prove this theorem, we will use the HM-GM-AM inequality
\begin{lemma}
\label{lemma:hm_am}
Let $a_1, \dots, a_T$ positive numbers. Then, we have
\[
\frac{T}{\sum_{t=1}^T \frac{1}{a_t}}
\leq \left(\prod_{t=1}^T a_t\right)^{1/T}
%= \left(\prod_{t=1}^T a_t^\frac{1}{\nu}\right)^\frac{\nu}{T}
\leq \left(\frac{1}{T} \sum_{t=1}^T a_t\right)~.
\]
\end{lemma}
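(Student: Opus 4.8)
The plan is to split the stated chain into its two constituent inequalities and prove each in turn: the geometric-mean~$\leq$~arithmetic-mean bound (the right-hand inequality) and the harmonic-mean~$\leq$~geometric-mean bound (the left-hand inequality). For the right-hand inequality I would invoke the concavity of the logarithm. Since $t \mapsto \log t$ is concave on $(0,\infty)$ and all $a_t>0$, Jensen's inequality gives
\[
\log\left(\frac{1}{T}\sum_{t=1}^T a_t\right) \geq \frac{1}{T}\sum_{t=1}^T \log a_t = \log\left(\left(\prod_{t=1}^T a_t\right)^{1/T}\right).
\]
Because $\exp$ is increasing, exponentiating both sides yields $\left(\prod_{t=1}^T a_t\right)^{1/T} \leq \frac{1}{T}\sum_{t=1}^T a_t$, which is exactly the right-hand inequality.

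For the left-hand inequality I would reuse the bound just established, applied to the reciprocals. Setting $b_t = 1/a_t > 0$ and invoking $\mathrm{GM}\leq\mathrm{AM}$ on $b_1,\dots,b_T$ gives
\[
\left(\prod_{t=1}^T \frac{1}{a_t}\right)^{1/T} \leq \frac{1}{T}\sum_{t=1}^T \frac{1}{a_t}.
\]
The left-hand side equals $\left(\prod_{t=1}^T a_t\right)^{-1/T}$, and both sides are strictly positive, so taking reciprocals reverses the inequality and produces
\[
\left(\prod_{t=1}^T a_t\right)^{1/T} \geq \frac{1}{\frac{1}{T}\sum_{t=1}^T \frac{1}{a_t}} = \frac{T}{\sum_{t=1}^T \frac{1}{a_t}},
\]
which is the left-hand inequality. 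Chaining the two displays then gives the full statement of the lemma.

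There is no genuine obstacle here, since this is the classical $\mathrm{HM}$-$\mathrm{GM}$-$\mathrm{AM}$ inequality; the only point that warrants a little care is justifying $\mathrm{GM}\leq\mathrm{AM}$ without circularity, as everything else reduces to it via the reciprocal trick above. If a fully self-contained argument is preferred to quoting Jensen, I would instead prove $\mathrm{GM}\leq\mathrm{AM}$ by normalization: dividing through by the arithmetic mean, write each normalized term as $1+s_t$ with $\sum_t s_t = 0$, and apply the elementary bound $1+s \leq e^s$ to obtain $\prod_t(1+s_t) \leq \exp\left(\sum_t s_t\right) = 1$. Either route is routine, and the harmonic-mean half follows immediately once $\mathrm{GM}\leq\mathrm{AM}$ is in hand.
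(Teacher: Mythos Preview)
Your argument is correct and complete: concavity of $\log$ (Jensen) gives $\mathrm{GM}\leq\mathrm{AM}$, and applying this to the reciprocals immediately yields $\mathrm{HM}\leq\mathrm{GM}$. The paper itself does not supply a proof of this lemma---it simply states the classical HM--GM--AM inequality and uses it---so there is nothing to compare against; your write-up would serve perfectly well as the missing justification.
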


We can now prove our main Theorem. The proof is immediate from the proof of the results in \citet{Levy17}.
\begin{proof}[Proof of Theorem~\ref{thm:main}]
From the guarantee of the algorithm and the setting of $\bq_t$ we obtain for all $\bu \in \R^d$
\begin{align*}
\sum_{t=1}^T \frac{f(\bx_t) - f(\bu)}{\|\bg_t\|_\star}
\leq \sum_{t=1}^T \frac{\langle \bg_t, \bx_t -\bu\rangle}{\|\bg_t\|_\star}
\leq \psi_T(\bu)~.
\end{align*}

Now, from the definition of $\bar{\bx}_T$ and Jensen's inequality, we have
\[
\sum_{t=1}^T \frac{f(\bx_t) -f(\bx^\star)}{\|\bg_t\|_\star}
\geq (f(\bar{\bx}_T) -f(\bx^\star))\sum_{t=1}^T \frac{1}{\|\bg_t\|_\star}~.
\]
Hence, we have
\begin{align*}
f(\bar{\bx}_T) - f(\bx^\star)
\leq \frac{\psi_T(\bx^\star)}{\sum_{t=1}^T \frac{1}{\|\bg_t\|_\star} } 
\leq \frac{\psi_T(\bx^\star)}{T} \left(\prod_{t=1}^T \|\bg_t\|_\star\right)^\frac{1}{T}~.
\end{align*}
In the case $\nu=0$, the above inequality gives the stated guarantee.

Let's now assume that $\nu>0$ and, continuing the chain of inequalities, we have
\begin{align*}
\frac{\psi_T(\bx^\star)}{T} \left(\prod_{t=1}^T \|\bg_t\|_\star\right)^\frac{1}{T}
&= \frac{\psi_T(\bx^\star)}{T} \left(\prod_{t=1}^T \frac{\|\bg_t\|_\star^{1+\frac{1}{\nu}}}{\|\bg_t\|_\star}\right)^\frac{\nu}{T} \\
%&\leq \frac{\psi_T(\bx^\star)}{T} \left(\prod_{t=1}^T \frac{\|\bg_t\|_\star^{1+\frac{1}{\nu}}}{\|\bg_t\|_\star}\right)^\frac{\nu}{T} \\
&\leq \frac{\psi_T(\bx^\star)}{T} \left( \prod_{t=1}^T \frac{\alpha(L(\bx_t))^\frac{1}{\nu}(f(\bx_t) -  f(\bx^\star))}{\|\bg_t\|_\star}\right)^\frac{\nu}{T} \\
&= \frac{\psi_T(\bx^\star)}{T}\alpha^\nu \left( \prod_{t=1}^T L(\bx_t)\right)^\frac{1}{T} \left( \prod_{t=1}^T \frac{f(\bx_t) -  f(\bx^\star)}{\|\bg_t\|_\star} \right)^\frac{\nu}{T}\\
&\leq \frac{\psi_T(\bx^\star)}{T} \alpha^\nu \left( \prod_{t=1}^T L(\bx_t)\right)^\frac{1}{T} \left( \frac{1}{T}\sum_{t=1}^T \frac{f(\bx_t) -  f(\bx^\star)}{\|\bg_t\|_\star} \right)^{\nu}\\
&\leq \frac{\psi_T(\bx^\star)}{T} \alpha^\nu \left( \prod_{t=1}^T L(\bx_t)\right)^\frac{1}{T} \left(\frac{\psi_T(\bx^\star)}{T}\right)^{\nu}~.
%&\leq \frac{\left(\sum_{t=1}^T \|\bg_t\|_\star^{\frac{1}{\nu}}\right)^\nu }{T^{1+\nu}}\psi_T(\bx^\star) \\
%&= \frac{\left(\sum_{t=1}^T \frac{\|\bg_t\|_\star^{1+\frac{1}{\nu}}}{\|\bg_t\|_\star}\right)^\nu }{T^{1+\nu}} \psi_T(\bx^\star)\\
%&\leq \frac{\left(\left(1+\frac{1}{\nu}\right) (L_\nu)^\frac{1}{\nu} \sum_{t=1}^T \frac{f(\bx_t)-f(\bx^\star)}{\|\bg_t\|_\star}\right)^\nu }{T^{1+\nu}}\psi_T(\bx^\star)\\
%&\leq \frac{\left(\left(1+\frac{1}{\nu}\right) (L_\nu)^\frac{1}{\nu} \psi_T(\bx^\star)\right)^\nu }{T^{1+\nu}}\psi_T(\bx^\star)\\
%&=L_\nu \left(1+\frac{1}{\nu}\right)^\nu \left(\frac{\psi_T(\bx^\star)}{T}\right)^{1+\nu},
\end{align*}
where we used Lemma~\ref{lemma:hm_am} in the first inequality, Corollary~\ref{cor:holder} in the second one, the assumption on the online learning algorithm in the last one.
\end{proof}

We now consider some examples.

\paragraph{Online Gradient Descent}
Consider online gradient descent with constant learning rate $\eta=\frac{\alpha}{\sqrt{T}}$. In this case, Algorithm~\ref{alg:nogd} simply becomes Algorithm~\ref{alg:nogd2}.

\begin{algorithm}[t]
\caption{Normalized Gradient Descent}
\label{alg:nogd2}
\begin{algorithmic}[1]
{
    \REQUIRE{A function $f:\R^d \to\R$ to minimize}
    \STATE{Set $\bx_1\in \R^d$ arbitrary}
    \FOR{$t=1$ {\bfseries to} $T$}
    \STATE{Set $\bg_t= \nabla \ell_t(\bx_t)$}
    \IF{$\|\bg_t\|_\star=0$}
    \RETURN{$\bx_t$, optimal point}
    \ENDIF
    \STATE{$\bx_{t+1}=\bx_{t} - \frac{\alpha}{\sqrt{T}} \frac{\bg_t}{\|\bg_t\|_\star}$} 
    \ENDFOR
    \RETURN{$\bar{\bx}_T=\frac{1}{\sum_{t=1}^T \frac{1}{\|\bg_t\|_\star}}\sum_{t=1}^T \frac{\bx_t}{\|\bg_t\|_\star}$}
}
\end{algorithmic}
\end{algorithm}

Then, from very standard online learning results \citep[e.g., see][]{Orabona19}, we immediately have 
\begin{equation}
\label{eq:nogd}
\sum_{t=1}^T \frac{f(\bx_t) -f(\bu)}{\|\bg_t\|_2}
\leq \frac{\sqrt{T}\|\bu-\bx_1\|_2^2}{2\alpha}+ \frac{\alpha\sqrt{T}}{2} - \frac{\sqrt{T}}{2\alpha}\|\bx_{T+1}-\bu\|_2^2~.
\end{equation}
So, using Theorem~\ref{thm:main} and assuming the function $(L_\nu,\nu)$-H\"{o}lder smooth, we obtain
\[
f(\bar{\bx}_T) -f(\bx^\star)
\leq L_\nu \left(1+\frac{1}{\nu}\right)^\nu \left(\frac{1}{2\sqrt{T}} \left(\frac{\|\bx_1-\bx^\star\|_2^2}{\alpha}+\alpha\right)\right)^{1+\nu}~.
\]
So, we get $O(1/T)$ in the classic smooth case and $O(1/\sqrt{T})$ in the Lipschitz case, and rates in between in the general H\"{o}lder case.

We can also state a bound on the norm of the iterates: From \eqref{eq:nogd}, using the fact that $f(\bx_t)-f(\bx^\star)\geq0$, we also have that
\[
\|\bx_{T+1}-\bu\|_2^2
\leq \|\bu-\bx_1\|_2^2+\alpha^2~.
\]
Hence, the iterates are bounded. This means that the H\"{o}lder smoothness only have to hold on a ball around the initial point. So, this result generalizes the one in \citet{MishchenkoM20} because they only consider the usual notion of smoothness.

\paragraph{FTRL/DA}
With enough math, we could even prove the same thing for OGD with a time-varying learning rate. However, we can do it in a much simpler way using FTRL with linearized losses / DA.

In this case, the update is
\[
\bx_{t+1} 
= - \frac{\alpha}{\sqrt{t}} \sum_{i=1}^t \bq_i
= - \frac{\alpha}{\sqrt{t}} \sum_{i=1}^t \frac{\bg_i}{\|\bg_i\|_2}~.
\]
As before, proving that the iterates are bounded is very easy (reason as above and use Exercise 7.2 in \citet{Orabona19}). So, using the known guarantee for FTRL and reasoning as above, we obtain
\[
f(\bar{\bx}_T) -f(\bx^\star)
\leq L_\nu \left(1+\frac{1}{\nu}\right)^\nu \left(\frac{1}{\sqrt{T}}\left(\frac{\|\bx_1-\bx^\star\|_2^2}{2\alpha}+\alpha\right)\right)^{1+\nu},
\]
but this time we do not need to know $T$ ahead of time. In this case, we lost a ``2'' because $\sum_{t=1}^T \frac{1}{\sqrt{t}}\leq 2\sqrt{T}$.
Note that we could even consider the non-Euclidean case, everything is essentially the same.

\paragraph{Parameter-free version}
In the above bounds, the optimal value of $\alpha$ depends on $\|\bx_1-\bx^\star\|_2$. If you knew $\|\bx_1-\bx^\star\|_2$ (but we do not), we would obtain the term $\|\bx_1-\bx^\star\|_2$ inside the parentheses above, without the square.
Luckily, the past 11 years of research~\citep[e.g.,][and many many more!]{StreeterM12,McMahanO14,Orabona14,OrabonaP16,CutkoskyB17,OrabonaT17,FosterKMS17,CutkoskyO18,JunO19,MhammediK20,CarmonH22,ZhangCP22,ChenCO22,ZhangC22,JacobsenC23,ChzhenGS23} in parameter-free algorithms gave us an easy solution: algorithms that do not need to know $\|\bx_1-\bx^\star\|_2$ yet achieve the best rates up to (unavoidable) polylogarithmic factors.

\begin{remark}
``Parameter-free'' is a technical term like ``universal''. The definition people use is the following one: \emph{An optimization algorithm is parameter-free if it can achieve the convergence rate (in expectation or high probability) $\tilde{O}(\frac{\|\bx_1-\bx^\star\|}{\sqrt{T}})$ uniformly over all convex functions with bounded stochastic subgradients, with $T$ stochastic subgradients query and possible knowledge of the bound of the stochastic subgradients.} An analogous definition can be stated for the setting of online convex optimization as well. So, ``parameter-free'' does not mean ``without anything to tune'' nor ``without knowing anything on the function''. The name is motivated by the fact that most of the algorithms that achieve this guarantee (but not all! See, e.g., \citet{FosterKMS17}) happen to do it with methods that do not have learning rates.
\end{remark}

However, the main disadvantage of parameter-free algorithms is the need to have bounded gradients. But normalized gradients are always bounded! In particular, consider again the Euclidean case and just use the parameter-free KT algorithm~\citep{OrabonaP16} with normalized gradients:
\[
\bx_{t+1} 
= \bx_1 + \frac{-\sum_{i=1}^{t} \bq_i }{t+1}\left(d_0 - \sum_{i=1}^t \langle \bq_i, \bx_i\rangle\right)
= \bx_1 + \frac{-\sum_{i=1}^{t} \frac{\bg_i}{\|\bg_i\|_2} }{t+1}\left(d_0 - \sum_{i=1}^t \left\langle \frac{\bg_i}{\|\bg_i\|_2}, \bx_i\right\rangle\right)
\]
In case you think that this is a weird algorithm, think again: it is very difficult to beat this algorithm, because it is designed exactly for the case in which all the vectors have norm equal to 1! Again, using Theorem~\ref{thm:main} and the regret guarantee of KT, we immediately obtain
\[
f(\bar{\bx}_T) -f(\bx^\star)
\leq L_\nu \left(1+\frac{1}{\nu}\right)^\nu \left(\frac{\|\bx_1-\bx^\star\|_2}{\sqrt{T}} \sqrt{\ln \left( \frac{24T^2\|\bx_1-\bx^\star\|^2_2}{d_0^2}+1\right)}+ \frac{d_0}{T}\right)^{1+\nu}~.
\]
The advantage of this bound over the ones above is that this has almost the optimal dependency on $\|\bx_1-\bx^\star\|_2$ without the need to know anything nor to set a learning rate!

Note that this bound is not optimal in $T$ because of the $T$ term in the log. However, this is easy to fix, again with standard methods.
There are a lot of parameter-free algorithms, KT is just the simplest one. So, using the parameter-free algorithms that do not have $T$ inside the log~\citep[e.g.,][]{McMahanO14,ZhangCP22}, we can also remove the $T$ term in the log at the expense of the term $\frac{d_0}{T}$ that becomes $O(\frac{d_0}{\sqrt{T}})$ inside the parentheses.
So, the final bound would be
\[
f(\bar{\bx}_T) -f(\bx^\star)
= O\left(L_\nu \left(1+\frac{1}{\nu}\right)^\nu \left(\frac{\|\bx_1-\bx^\star\|_2}{\sqrt{T}} \sqrt{\ln \left( \frac{\|\bx_1-\bx^\star\|_2}{d_0}+1\right)}+ \frac{d_0}{\sqrt{T}}\right)^{1+\nu}\right)~.
\]
So, here we adapt to $\nu$, adapt to $\|\bx_1-\bx^\star\|_2$, adapt to $L_\nu$, and we are asymptotically optimal in $T$. Also, there are no learning rates to tune and the analysis was very simple because we just put together a known algorithm and its bound with Theorem~\ref{thm:main}.

There is also another way to obtain a similar guarantee: Just scale each gradient $\bg_t$ by $\sqrt{\sum_{i=1}^t \|\bg_i\|^2_2}$ and feed them to a parameter-free algorithm. Reasoning as in the AdaGrad-norm stepsizes, you will get the same bound; I leave the derivation as an exercise for the reader.

Note that in this case the bounded iterates are more delicate, but it can still be done using a parameter-free algorithm based on FTRL and reasoning as in \citet{OrabonaP21}.

\section{Related Work}

AdaGrad-norm stepsizes were proposed by \citet{StreeterM10} and widely used in the online learning community~\citep[e.g.,][]{OrabonaP15} before being rediscovered in the stochastic optimization, for the first time by \citet{LiO19}.

The idea of using normalized gradients to gain adaptivity goes back at least to \citet[Section 3.2.3]{Nesterov04}, where he gets adaptation to the Lipschitz constant. Reading his proof, it should be immediate to realize that the same trick works essentially for any algorithm. The majority of this report comes the underappreciated paper by \citet{Levy17}. \citet{Levy17} presents his result for a particular algorithm, but here I show that his proof applies to any online linear optimization algorithm.
As far as I know, the dependency on the geometric mean of the smoothness is new and it mirrors a similar bound I proved in the blog post\footnote{\url{https://parameterfree.com/2021/02/14/perceptron-is-not-sgd-a-better-interpretation-through-pseudogradients/}} on pseudogradients for the Perceptron.
For a paper with true universality, i.e., optimal accelerated rates in all cases, see \citet{Nesterov15b}. Also, \citet{Grimmer22} proves a more fine-grained rate for Nesterov's method on sum of functions.

Some related, but weaker and less general results appear in other papers too. \citet{MishchenkoM20} proved that one can obtain $O(1/T)$ rate in the smooth case ($\nu=1$) using gradient descent with an adaptive stepsize that tries to estimate the local smoothness. This also implies that in the non-smooth case their algorithm might fail.
\citet{MishchenkoM20} seem to have missed that \citet{Levy17} proved the stronger result of adaptation to Lipschitz and smooth case for normalized gradient descent. However, the algorithm in \citet{MishchenkoM20} can work much better in the strongly convex case because it essentially uses a learning rate that is $\approx \frac{1}{2L}$, while the algorithms based on normalized gradients will achieve a linear rate. That said, \citet{Levy17} has other results for the strongly convex case normalizing by the square of the norm of the gradient.
The guarantee on the boundedness of normalized gradient descent is new but kind of obvious and it is inspired by a similar guarantee proved in \citet{MishchenkoM20}. Moreover, the bound on the iterates of GD is well-known, see for example \citet[Corollary 2.b]{Xiao10} for the FTRL/DA case.

The adaptation to smoothness in the parameter-free case by rescaling the gradients of a parameter-free algorithm (and actually of any FTRL/DA algorithm) by $(G^2+\sum_{i=1}^{t-1} \|\bg_i\|^2_2)^{1/2+\epsilon}$ is in \citet[Lemma 26]{OrabonaP21}, where in the deterministic case $\epsilon$ can be set to 0 and $G^2$ substituted by $\|\bg_t\|^2_2$ because these changes are needed in the stochastic analysis only as explained in \citet{LiO19}.
The use of normalized gradients in parameter-free algorithms for the deterministic setting as a very easy but uninteresting alternative to avoid knowing the Lipschitz constant is folklore knowledge and it is explicitly mentioned in \citet{OrabonaP21} on page 13. Indeed, the entire problem of parameter-freeness becomes trivial and uninteresting in the deterministic setting, removing even the need to know the Lipschitz constant, as observed by Nemirovski in 2013 (his simple construction is explained in the tutorial on Parameter-free Online Optimization that we gave at ICML 2020).
\citet{KhaledMJ23} seem unaware of this folklore knowledge and rediscover it with a more convoluted proof. They also rediscover the guarantee of normalized gradient descent from \citet{Levy17}.
For the smooth deterministic case, \citet{CarmonH22} proved that a parameter-free algorithm with knowledge of the smoothness constant can achieve a rate of $O(1/T)$ paying only an additional log log factor in $\|\bx_1-\bx^\star\|$.

The first parameter-free algorithm to prove bounded iterates (with probability one) was in \citet{OrabonaP21} but without a precise bound. \citet{IvgiHC23} were the first one to design a parameter-free algorithm where $\|\bx_1-\bx_t\|_2$ is bounded by a $K \|\bx_1-\bx^\star\|_2$, where $K$ is a universal constant with high probability.

\section{Conclusion and Open Problems}
We have shown that two different reductions, rescaled gradients by AdaGrad-norm stepsizes and normalized gradients, obtain adaptive rates. There are now many open problems, for example related to the stochastic case. Indeed, the deterministic case is almost too easy to be interesting.
In fact, \citet{Levy17} has a much more complex procedure using adaptive minibatches to use normalized gradients (minus the adaptation to $\nu$) in the stochastic case.
On the issue of focusing on the deterministic setting, I feel that it is becoming the new ``let's assume a bounded domain'': Some people seem to do it not because it makes sense in their applications but just because it is easier to deal with. Indeed, it is at the same time surprising and depressing how many papers have recently appeared on this easy problem.

Another interesting direction is to obtain accelerated rates using parameter-free algorithms: it is still deterministic but there are a number of technical difficulties in using acceleration in parameter-free algorithms.

\section*{Acknowledgments}

Thanks to Benjamin Grimmer, Kfir Levy, Mingrui Liu, Nicolas Loizou, Yura Malitsky, and Aryan Mokthari for feedback and comments on a preliminary version of this paper.

\bibliographystyle{plainnat}
\bibliography{../../learning}

\end{document}